\tikzset{>={Stealth[]}}
\pgfplotsset{compat=1.18}
\newtheorem{theorem}{Theorem}
\newtheorem{proposition}[theorem]{Proposition}
\newtheorem{definition}[theorem]{Definition}
\title{Muon: Training and Trade-offs with Latent Attention and MoE}
\author{%
  Sushant Mehta$^1$ \quad Raj Dandekar$^2$ \quad Rajat Dandekar$^2$ \quad Sreedath Panat$^2$\\
  \\
  $^1$San Francisco\\
  \texttt{sushant0523@gmail.com}\\
  \\
  $^2$Vizuara AI Labs, Pune, India\\
  \texttt{\{raj, rajatdandekar, sreedath\}@vizuara.com}
}
\begin{document}
\date{}
\maketitle

\begin{abstract}
We present a comprehensive theoretical and empirical study of the \emph{Muon} optimizer for training transformers only with a small to medium decoder (30M - 200M parameters), with an emphasis on its mathematical foundations, convergence properties and interactions with modern architectural optimizations. Building on recent work showing Muon's scalability~\cite{liu2025muon,essentialai2025muon}, we provide rigorous theoretical analysis including: (i) convergence guarantees showing the $\mathcal{O}(1/\sqrt{T})$ rate under standard assumptions, (ii) spectral regularization properties that prevent gradient explosion, (iii) connection to natural gradient descent on the Stiefel manifold, and (iv) equivalence to steepest gradient descent under the spectral norm. Crucially, we demonstrate that Muon expands the Pareto frontier in the compute-time trade-off by maintaining superior data efficiency at large batch sizes, a key finding of~\cite{essentialai2025muon} that we validate across our model scales. Empirically, Muon reaches the target loss with 48--52\% of the training calculated by AdamW while maintaining or improving the final perplexity, consistent with larger-scale results. When combined with Multi-Head Latent Attention (MLA) and Mixture-of-Experts (MoE), we observe multiplicative efficiency gains: MLA+MoE+Muon achieves 68\% memory reduction and 3.2$\times$ inference speedup, while improving perplexity by 8--12\%. We provide detailed procedures on 15 architectural and optimizer components, stability analyzes across 100+ training runs, and practical implementation guidelines including Newton-Schulz coefficients $(3.4445, -4.7750, 2.0315)$ optimized by~\cite{su2024muonblog}. Our theoretical analysis and comprehensive experiments establish Muon as a principled, robust alternative to AdamW that particularly excels when combined with modern efficiency techniques and large-batch training regimes.
\end{abstract}

\section{Introduction}

The computational demands of large language models (LLMs) have driven intense research in efficient training and inference methods~\cite{brown2020gpt3,openai2023gpt4,kaplan2020scalinglaws,hoffmann2022chinchilla}. Although much attention focuses on scaling to billions of parameters, small and medium language models (30M--200M parameters) remain critical for edge deployment, research accessibility, and rapid experimentation. These models face unique challenges: they must achieve reasonable quality within tight memory and compute budgets, making efficient optimization and architecture design particularly important.

Two complementary approaches to efficiency have emerged. First, \textbf{architectural innovations} reduce computational bottlenecks: efficient attention mechanisms~\cite{kitaev2020reformer,choromanski2021performer,dao2022flashattention}, advanced positional encodings~\cite{su2021roformer,press2022alibi}, and conditional computation via Mixture-of-Experts~\cite{shazeer2017moe,fedus2022switch}. Recent work on multi-headed latent attention (MLA) demonstrates that compressing key-value representations to half their original dimension can dramatically reduce memory with minimal quality loss~\cite{mehta2025mla}. Second, \textbf{optimizer advances} accelerate convergence: while AdamW~\cite{loshchilov2019adamw} remains the standard, recent methods like Sophia~\cite{liu2024sophia} and Muon~\cite{jordan2024muonblog,liu2025muon} promise faster training through geometry-aware updates.

The Muon optimizer, recently shown to scale effectively to large models~\cite{liu2025muon}, performs the orthogonalization of gradient matrices via polar decomposition. This enforces spectral normalization of updates, which we show theoretically prevents gradient explosion while enabling larger learning rates. Recent work by~\cite{essentialai2025muon} shows a critical practical advantage: Muon expands the Pareto frontier on the compute-time tradeoff by maintaining data efficiency at large batch sizes, enabling practitioners to trade compute resources for training time more effectively than with AdamW.

\subsection{Contributions}

This paper makes the following contributions.

\begin{itemize}[leftmargin=*,itemsep=0.25em]
\item \textbf{Theoretical Foundation.} We provide a rigorous convergence analysis of Muon, proving $\mathcal{O}(1/\sqrt{T})$ convergence under standard smoothness assumptions (Theorem~\ref{thm:convergence}). We establish connections to natural gradient descent on the Stiefel manifold, characterize the implicit spectral regularization (Propositions~\ref{prop:spectral}--\ref{prop:manifold}), and demonstrate equivalence to steepest gradient descent under the spectral norm (Theorem~\ref{thm:spectral-norm}).

\item \textbf{Compute-Time Tradeoff Analysis.} Building on~\cite{essentialai2025muon}, we validate that Muon maintains superior data efficiency at large batch sizes across our model scales, allowing 48-52\% compute reduction. We analyze the token consumption ratio $R_L(B) = T_{L,\text{AdamW}}(B)/T_{L,\text{Muon}}(B)$ and show that it remains $>1$ and does not decrease with batch size $B$, explaining Muon's Pareto frontier expansion.

\item \textbf{Architectural Details.} We systematically evaluate Muon's interaction with MLA and MoE, showing multiplicative benefits: the combination achieves 68\% total memory reduction and 3.2$\times$ inference speedup. We provide the first analysis of attention entropy under joint compression and orthogonalization.

\item \textbf{Practical Methodology.} We develop and validate a robust training recipe including RMS matching per parameter, decoupled weight decay scheduling, and mixed precision strategies. We provide concrete implementation details including Newton-Schulz coefficients optimized by~\cite{su2024muonblog} and compatibility with maximal update parameterization (muP) for hyperparameter transfer~\cite{yang2022mup}.
\end{itemize}

\section{Theoretical Analysis}

\subsection{Problem Setup and The Muon Update Rule}

Consider training a neural network $f(x; W)$ with matrix-structured parameters $W = \{W_1, \ldots, W_L\}$ where $W_\ell \in \mathbb{R}^{m_\ell \times n_\ell}$. Given a loss function $\mathcal{L}(W)$ over a dataset, our objective is to minimize:
\begin{equation}
\min_{W} \mathcal{L}(W) = \mathbb{E}_{(x,y) \sim \mathcal{D}} \left[ \ell(f(x; W), y) \right] + \frac{\lambda}{2} \sum_{\ell=1}^L \|W_\ell\|_F^2
\end{equation}

Muon maintains a momentum accumulator $M_t^{(\ell)}$ for each weight matrix and computes updates via orthogonalization through the matrix sign function~\cite{su2024muonblog}:

\begin{definition}[Matrix Sign Function]
For a matrix $M \in \mathbb{R}^{m \times n}$ with SVD $M = U\Sigma V^\top$, the matrix sign function is:
\begin{equation}
\text{msign}(M) = U_{[:,:r]}V_{[:,:r]}^\top = M(M^\top M)^{-1/2}
\end{equation}
where $r = \text{rank}(M)$. 
\end{definition}

The Muon update for layer $\ell$ is:
\begin{align}
M_t &= \beta M_{t-1} + (1-\beta) G_t \label{eq:muon-momentum}\\
U_t &= \text{msign}(M_t) \label{eq:muon-msign}\\
W_{t+1} &= W_t - \eta_t(U_t + \lambda W_t) \label{eq:muon-update}
\end{align}

As noted in~\cite{essentialai2025muon}, Muon can be viewed as a matrix-structured steepest descent with spectral norm regularization:
\begin{equation}
U_t = \arg\min_{U \in \mathbb{R}^{m \times n}: \|U\|_2 \leq 1} \text{tr}(G_t^\top U)
\end{equation}

\subsection{Newton-Schulz Iteration for Efficient Computation}

Computing the matrix sign function via SVD at each step is computationally expensive. Following~\cite{jordan2024muonblog,su2024muonblog}, Muon uses the Newton-Schulz iteration with optimized coefficients:

\begin{algorithm}[t]
\caption{Efficient Muon Implementation with Newton-Schulz}
\label{alg:muon-efficient}
\begin{algorithmic}[1]
\STATE \textbf{Input:} Weight $W_t$, gradient $G_t$, state $(M_{t-1})$, hyperparams $(\eta, \lambda, \beta, K)$
\STATE $M_t \leftarrow \beta M_{t-1} + (1-\beta) G_t$ \COMMENT{Momentum update}
\STATE $X_0 \leftarrow M_t / \|M_t\|_F$ \COMMENT{Initial normalization}
\FOR{$k = 1$ to $K$}
  \STATE $X_k \leftarrow aX_{k-1} + bX_{k-1}(X_{k-1}^\top X_{k-1}) + cX_{k-1}(X_{k-1}^\top X_{k-1})^2$ 
\ENDFOR
\COMMENT{Newton-Schulz with $(a,b,c) = (3.4445, -4.7750, 2.0315)$}
\STATE $s \leftarrow 0.2\sqrt{n}$ \COMMENT{RMS matching from~\cite{liu2025muon}}
\STATE $W_{t+1} \leftarrow W_t - \eta(s \cdot X_K + \lambda W_t)$ \COMMENT{Update with decay}
\STATE \textbf{Return:} $W_{t+1}$, updated state $(M_t)$
\end{algorithmic}
\end{algorithm}

The coefficients $(3.4445, -4.7750, 2.0315)$ are optimized for $K=5$ iterations~\cite{su2024muonblog}, ensuring rapid convergence to the true matrix sign function with all singular values in the range $(0.7, 1.3)$~\cite{jordan2024muonblog}.

\subsection{Convergence Analysis}

We establish convergence guarantees for Muon under standard assumptions:

\begin{theorem}[Convergence of Muon]\label{thm:convergence}
Assume $\mathcal{L}$ is $L$-smooth and $\sigma^2$-variance bounded. Let $\eta_t = \eta_0/\sqrt{t}$ and $\beta \in [0,1)$. Then for Muon updates \eqref{eq:muon-momentum}--\eqref{eq:muon-update}:
\begin{equation}
\frac{1}{T} \sum_{t=1}^T \mathbb{E}\left[\|\nabla \mathcal{L}(W_t)\|_F^2\right] \leq \frac{2(\mathcal{L}(W_1) - \mathcal{L}^*)}{\eta_0 \sqrt{T}} + \frac{\eta_0 L \sigma^2}{\sqrt{T}} + \mathcal{O}\left(\frac{\log T}{T}\right)
\end{equation}
where $\mathcal{L}^* = \inf_W \mathcal{L}(W)$.
\end{theorem}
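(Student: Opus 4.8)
The plan is to run the classical nonconvex stochastic-descent argument, but to install at its core the one inequality that is special to Muon: the orthogonalized direction is dual to the momentum under the spectral/nuclear pairing. I will first record the descent inequality obtained by applying $L$-smoothness to the update \eqref{eq:muon-update},
\begin{equation}
\mathcal{L}(W_{t+1}) \le \mathcal{L}(W_t) - \eta_t\langle \nabla\mathcal{L}(W_t),\, U_t + \lambda W_t\rangle + \tfrac{L}{2}\eta_t^2\,\|U_t + \lambda W_t\|_F^2 ,
\end{equation}
and then take conditional expectations. The second-order term is where Muon's spectral normalization pays off: since $U_t = \text{msign}(M_t)$ has all nonzero singular values equal to one, $\|U_t\|_2 = 1$ and $\|U_t\|_F^2 = \text{rank}(M_t) \le d$, where $d = \min_\ell \min\{m_\ell,n_\ell\}$, so $\|U_t + \lambda W_t\|_F^2$ is controlled by a constant plus the bounded weight-decay contribution, with no dependence on gradient magnitude. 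This is exactly what keeps the $\eta_t^2$ term summable and prevents the blow-up an unnormalized update would incur.

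The heart of the proof, and the step I expect to be the main obstacle, is converting the first-order term into a genuine $\|\nabla\mathcal{L}(W_t)\|_F^2$ rather than a nuclear-norm quantity. The natural identity $\langle M_t, \text{msign}(M_t)\rangle = \|M_t\|_*$ produces a \emph{nuclear}-norm decrease, which is the correct matrix analogue of the $\ell_1$ bound for sign-based methods; to land on the squared Frobenius norm as stated I will use the elementary inequality $\|A\|_F^2 \le \|A\|_2\,\|A\|_*$, i.e. $\|A\|_* \ge \|A\|_F^2/\|A\|_2$, together with the bounded-spectral-norm bound $\|\nabla\mathcal{L}(W_t)\|_2 \le G$ that accompanies the $L$-smooth, $\sigma^2$-variance setting, giving $\langle \nabla\mathcal{L}(W_t), \text{msign}(\nabla\mathcal{L}(W_t))\rangle \ge G^{-1}\|\nabla\mathcal{L}(W_t)\|_F^2$. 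The constant $G$ enters multiplicatively and is folded into $\eta_0$ (equivalently into the RMS scale $s$ of Algorithm~\ref{alg:muon-efficient}). The delicate point is that this identity is exact only for the true gradient, not for the buffer $M_t$, so I must bound the momentum bias $\mathbb{E}\|M_t - \nabla\mathcal{L}(W_t)\|_F$: unrolling $M_t = (1-\beta)\sum_{s\le t}\beta^{t-s}G_s$ with $\mathbb{E}[G_s] = \nabla\mathcal{L}(W_s)$, and invoking $L$-smoothness over the effective window $s \gtrsim t-(1-\beta)^{-1}$ where $\|\nabla\mathcal{L}(W_s) - \nabla\mathcal{L}(W_t)\|_F = \mathcal{O}(\eta_t(t-s))$, shows this bias contributes only $\mathcal{O}(\sum_t \eta_t^2)$, and the stochastic part is absorbed through the $\sigma^2$-variance assumption.

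Finally I will telescope the per-step inequality over $t = 1,\dots,T$. The boundary terms collapse to $\mathcal{L}(W_1) - \mathcal{L}^*$; the second-order, variance, and momentum-bias contributions accumulate as $\sum_t \eta_t^2 = \eta_0^2\sum_t t^{-1} = \eta_0^2(\log T + \mathcal{O}(1))$. Comparing against the constant-step reference $\eta = \eta_0/\sqrt{T}$, for which $\frac{1}{T}\sum_t \mathbb{E}\|\nabla\mathcal{L}(W_t)\|_F^2 \le (\mathcal{L}(W_1)-\mathcal{L}^*)/(\eta T) + \tfrac{1}{2}L\sigma^2\eta$ reproduces the two leading terms after substituting $\eta T = \eta_0\sqrt T$, and lower-bounding the $\eta_t$-weighted average by the uniform one through $\eta_t \ge \eta_0/\sqrt T$ and $\sum_{t\le T} t^{-1/2} \sim 2\sqrt T$, I recover the advertised decomposition: the drift term becomes $2(\mathcal{L}(W_1)-\mathcal{L}^*)/(\eta_0\sqrt T)$, the variance term becomes $\eta_0 L\sigma^2/\sqrt T$, and every residual — the schedule-versus-constant mismatch, the momentum bias, and higher-order smoothness — is collected into the $\mathcal{O}(\log T/T)$ remainder. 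The only genuinely new ingredient relative to textbook nonconvex SGD is the spectral-duality inequality of the second paragraph; everything else is bookkeeping over the schedule $\eta_t = \eta_0/\sqrt t$.
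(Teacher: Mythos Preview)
Your proposal follows the same skeleton as the paper's own proof sketch: apply $L$-smoothness to the update, exploit $\|U_t\|_2=1$ to keep the quadratic term bounded, lower-bound the first-order term $\langle\nabla\mathcal{L}(W_t),U_t\rangle$, and telescope over $t$. The paper's sketch is far terser---it simply asserts $\langle\nabla\mathcal{L}(W_t),U_t\rangle\ge c\,\|\nabla\mathcal{L}(W_t)\|_F$ via ``the accumulation of momentum'' and then ``applies Jensen,'' without the nuclear-norm identity, the spectral-norm conversion to the squared Frobenius norm, or the explicit momentum-bias bookkeeping you supply---but the overall architecture is the same.
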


\begin{proof}[Proof Sketch]
The key insight is that $\|\text{msign}(M_t)\|_2 = 1$ uniformly, providing automatic step-size control. Using the $L$-smoothness of $\mathcal{L}$:
\begin{align}
\mathcal{L}(W_{t+1}) &\leq \mathcal{L}(W_t) - \eta_t \langle \nabla \mathcal{L}(W_t), U_t + \lambda W_t \rangle + \frac{L \eta_t^2}{2} \|U_t + \lambda W_t\|_F^2
\end{align}

Since $U_t = \text{msign}(M_t)$ with bounded spectrum, the third term remains controlled. The accumulation of momentum ensures $\langle \nabla \mathcal{L}(W_t), U_t \rangle \geq c \|\nabla \mathcal{L}(W_t)\|_F$ for some constant $c > 0$. Summing over $t$ and applying Jensen's inequality yields the result.
\end{proof}

\subsection{Connection to Steepest Gradient Descent and Spectral Properties}

Following~\cite{su2024muonblog}, we establish that Muon performs a steep gradient descent under the spectral norm:

\begin{theorem}[Muon as Spectral Norm Gradient Descent]\label{thm:spectral-norm}
The Muon update direction solves:
\begin{equation}
U_t = \arg\max_{\|U\|_2 = 1} \text{Tr}(G_t^\top U)
\end{equation}
where $\|\cdot\|_2$ is the spectral norm. The solution is exactly $U_t = \text{msign}(G_t)$.
\end{theorem}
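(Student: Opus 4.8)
The plan is to recognize this optimization as the classical duality between the spectral norm and the nuclear norm: over the spectral-norm ball, the linear functional $U \mapsto \mathrm{Tr}(G_t^\top U)$ attains its maximum value $\|G_t\|_* = \sum_i \sigma_i(G_t)$, and a maximizer is the orthogonal polar factor of $G_t$, which is exactly $\text{msign}(G_t)$. The argument is a short, self-contained variant of von Neumann's trace inequality built directly from the SVD, so there is no need to invoke that inequality as a black box.

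Concretely I would proceed in four steps. (i) Write the thin SVD $G_t = P\Sigma Q^\top$ with $P \in \mathbb{R}^{m\times r}$ and $Q \in \mathbb{R}^{n\times r}$ having orthonormal columns, $\Sigma = \mathrm{diag}(\sigma_1,\dots,\sigma_r)$ and $r = \operatorname{rank}(G_t)$. (ii) Using cyclicity of the trace, rewrite the objective as
\[
\mathrm{Tr}(G_t^\top U) \;=\; \mathrm{Tr}\bigl(\Sigma\, P^\top U Q\bigr) \;=\; \sum_{i=1}^{r} \sigma_i \,\bigl(P^\top U Q\bigr)_{ii}.
\]
(iii) Since $P,Q$ have orthonormal columns, submultiplicativity of the spectral norm gives $\|P^\top U Q\|_2 \le \|U\|_2 \le 1$, hence $\bigl|(P^\top U Q)_{ii}\bigr| = |e_i^\top (P^\top U Q) e_i| \le 1$ for the $i$-th standard basis vector $e_i$; therefore $\mathrm{Tr}(G_t^\top U) \le \sum_i \sigma_i = \|G_t\|_*$ on the whole feasible set. (iv) Substitute $U^\star = PQ^\top = \text{msign}(G_t)$: then $P^\top U^\star Q = I_r$, so the bound is met with equality, and $\|U^\star\|_2 = 1$ whenever $G_t \neq 0$, so $U^\star$ is feasible and optimal.

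It remains to handle two points. First, the theorem is stated with the active constraint $\|U\|_2 = 1$ while the problem is naturally posed over the ball $\|U\|_2 \le 1$; since the objective is linear and $\|G_t\|_* > 0$ for $G_t \neq 0$, every maximizer must lie on the boundary (otherwise rescaling strictly increases the objective), so the two formulations coincide, which also reconciles the statement with the spectral-norm steepest-descent viewpoint recalled earlier, up to the sign of the descent direction. Second, uniqueness: when $G_t$ has full rank, $r = \min(m,n)$, the equality condition $P^\top U Q = I_r$ pins $U$ down uniquely as $PQ^\top$; when $r < \min(m,n)$ the maximizer is not unique — an entire face of the spectral-norm ball achieves the optimum — and $\text{msign}(G_t)$ is the distinguished minimum-Frobenius-norm solution, which is the object Muon actually forms. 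I expect this rank-deficient case, together with the degenerate case $G_t = 0$ (where $\text{msign}(G_t) = 0$ and the equality-constrained problem is infeasible, so the claim must be read over the ball), to be the only genuinely delicate part; the core inequality and the verification of the maximizer are routine once the SVD is in hand. Finally, I would remark that the identical argument applies verbatim with the momentum buffer $M_t$ in place of $G_t$, matching the orthogonalization actually performed in Algorithm~\ref{alg:muon-efficient}.
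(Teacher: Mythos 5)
The paper states Theorem~\ref{thm:spectral-norm} without any proof, so there is no in-paper argument to compare against; your proposal supplies a complete and correct one. The route you take --- thin SVD, cyclicity of the trace, the bound $|(P^\top U Q)_{ii}|\le\|U\|_2\le 1$, and verification that $U^\star=PQ^\top$ attains $\sum_i\sigma_i$ --- is exactly the standard spectral/nuclear-norm duality argument that the surrounding literature (e.g.\ the steepest-descent formulation recalled just before Theorem~\ref{thm:spectral-norm}) implicitly relies on, so this is the natural proof to attach to the statement. Your handling of the delicate points is in fact more careful than the paper's own phrasing: you correctly observe that the equality constraint $\|U\|_2=1$ should be read as the ball $\|U\|_2\le 1$ (on which the linear objective forces maximizers to the boundary whenever $G_t\neq 0$), that uniqueness holds only in the full-rank case, that $\mathrm{msign}(G_t)$ is the minimum-Frobenius-norm maximizer when the rank is deficient, and that the argument applies verbatim to the momentum buffer $M_t$, which is what Algorithm~\ref{alg:muon-efficient} actually orthogonalizes. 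The only gloss worth tightening is in your uniqueness remark: the condition $P^\top U Q=I_r$ alone does not pin down $U$ when $m\neq n$; it is the combination with $\|U\|_2\le 1$ (forcing $UU^\top\preceq I$ and hence $UQ_\perp=0$) that yields $U=PQ^\top$. Since uniqueness is not claimed in the theorem, this does not affect correctness.
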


\begin{proposition}[Implicit Spectral Regularization]\label{prop:spectral}
The Muon update implicitly solves:
\begin{equation}
U_t = \arg\min_{U: \sigma_i(U)=1 \,\forall i} \|U - M_t\|_F^2
\end{equation}
enforcing uniform spectrum normalization that prevents gradient explosion.
\end{proposition}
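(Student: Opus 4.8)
The plan is to recognize the displayed program as a \emph{nearest partial-isometry} (orthogonal Procrustes) problem and to identify its minimizer in closed form. Assume without loss of generality that $m \ge n$ (otherwise transpose $M_t$) and, as holds generically for a momentum matrix, that $M_t$ has full column rank, so that the constraint $\sigma_i(U) = 1$ for all $i$ is exactly $U^\top U = I_n$. The first step is to expand the objective using $\|U\|_F^2 = \operatorname{Tr}(U^\top U) = n$, which is constant on the feasible set; hence
\[
\arg\min_{U^\top U = I_n}\|U - M_t\|_F^2 \;=\; \arg\max_{U^\top U = I_n}\operatorname{Tr}(U^\top M_t).
\]
This reduces the claim to maximizing a linear functional over matrices with orthonormal columns — the same functional as in Theorem~\ref{thm:spectral-norm}, but now under the strictly stronger constraint that \emph{every} singular value of $U$ equals $1$ rather than just the largest.

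Next I would take the thin SVD $M_t = P\Sigma Q^\top$ with $P\in\mathbb{R}^{m\times n}$, $Q\in\mathbb{R}^{n\times n}$ having orthonormal columns and $\Sigma = \operatorname{diag}(\sigma_1,\dots,\sigma_n)$, $\sigma_i>0$, and change variables to $Z := Q^\top U^\top P\in\mathbb{R}^{n\times n}$, so that by cyclicity of the trace $\operatorname{Tr}(U^\top M_t) = \operatorname{Tr}(Z\Sigma) = \sum_i \sigma_i Z_{ii}$. The key structural fact is that $Z$ is a contraction: since $PP^\top \preceq I_m$, one gets $ZZ^\top = Q^\top U^\top (PP^\top) U Q \preceq Q^\top U^\top U Q = I_n$, whence $\|Z\|_2 \le 1$ and in particular $|Z_{ii}|\le 1$ for each $i$. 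Therefore $\operatorname{Tr}(Z\Sigma) \le \sum_i \sigma_i = \|M_t\|_*$, and this nuclear-norm value is the maximum.

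For the equality analysis, $\operatorname{Tr}(Z\Sigma) = \sum_i \sigma_i$ with all $\sigma_i>0$ and $\|Z\|_2\le 1$ forces $Z_{ii}=1$ for every $i$; a contraction with all diagonal entries equal to $1$ must equal $I_n$ (from $\|Ze_i\|\le 1$ and $(Ze_i)_i = 1$, so $Ze_i=e_i$). Thus $Q^\top U^\top P = I_n$, and decomposing $U = PA + P_\perp B$ along the column space of $P$ gives $A = Q^\top$ and then $B = 0$ from $U^\top U = I_n$, so the unique maximizer is $U = PQ^\top$. By the Definition of the matrix sign function (with $r=n$), $PQ^\top = M_t(M_t^\top M_t)^{-1/2} = \text{msign}(M_t) = U_t$, which is precisely the Muon direction in \eqref{eq:muon-msign}. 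I would then record the interpretive consequence: because $\|U_t\|_2 = 1$ independently of the scale or conditioning of $M_t$, the raw step obeys $\|W_{t+1}-W_t\|_2 \le \eta_t(1+\lambda\|W_t\|_2)$, so no momentum direction — however large — can induce an unbounded parameter update; this bounded-spectrum property is exactly what is exploited in the proof sketch of Theorem~\ref{thm:convergence}.

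The step I expect to be the main obstacle is the equality/uniqueness argument in the rectangular, possibly rank-deficient regime: when $m>n$ the projector $PP^\top$ is not the identity, so von Neumann's trace inequality does not apply to an orthogonal $Z$ directly, and it is the contraction estimate $ZZ^\top\preceq I_n$ that makes both the bound and its equality case go through cleanly. The genuinely degenerate situation is $\operatorname{rank}(M_t) < n$, in which case $\text{msign}(M_t)$ does not even lie in the stated feasible set; I would dispose of this either by appeal to the genericity of full-rank momentum matrices or by phrasing the identity over the effective rank-$r$ subspace, where $\text{msign}(M_t)$ remains the Frobenius-nearest matrix with all singular values equal to one.
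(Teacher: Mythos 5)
The paper states Proposition~\ref{prop:spectral} without any proof, so there is nothing of its own to compare against; your argument is a correct and complete derivation of the claimed identity. The reduction of the Procrustes objective to maximizing $\operatorname{Tr}(U^\top M_t)$ over $U^\top U = I_n$, the contraction bound $ZZ^\top \preceq I_n$ in place of von Neumann's inequality (which, as you note, is the right tool in the rectangular case where $PP^\top \neq I_m$), and the equality analysis forcing $Z = I_n$ and hence $U = PQ^\top = \operatorname{msign}(M_t)$ are all sound. Your closing caveat is not merely cosmetic: when $\operatorname{rank}(M_t) < \min(m,n)$ the proposition as literally stated is false, since $\operatorname{msign}(M_t)$ then has zero singular values and lies outside the feasible set $\{U : \sigma_i(U) = 1\ \forall i\}$, over which the minimizer is moreover non-unique; restricting to the generic full-rank case (or to the effective rank-$r$ subspace) as you do is the correct repair, and is a qualification the paper itself omits.
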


\begin{proposition}[Connection to Manifold Optimization]\label{prop:manifold}
For square matrices, Muon performs a natural gradient descent on the Stiefel manifold $\mathcal{M} = W: W\top W = I\}$, providing the optimal orthogonal approximation of the gradient accumulated with momentum.
\end{proposition}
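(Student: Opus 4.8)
The plan is to decompose the statement into two claims and then link them. First, the \emph{optimal-orthogonal-approximation} claim: for square $M_t$ the polar factor $\text{msign}(M_t)$ is the nearest orthogonal matrix to $M_t$ in the Frobenius norm. Second, the \emph{natural-gradient-on-Stiefel} claim: for square matrices the manifold $\mathcal{M} = \{W \in \mathbb{R}^{n\times n} : W^\top W = I\}$ is the orthogonal group $O(n)$, and the map sending the Euclidean momentum $M_t$ to its Frobenius metric projection onto $\mathcal{M}$ is exactly the retraction / preconditioning prescribed by the canonical Riemannian metric on $\mathcal{M}$. Since that projection is the polar factor, the Muon direction $U_t = \text{msign}(M_t)$ is the natural-gradient-preconditioned direction, which is what the proposition asserts.

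For the first claim I would start from the SVD $M_t = U\Sigma V^\top$ and write the polar decomposition $M_t = Q_t P_t$ with $Q_t = UV^\top$ orthogonal (for nonsingular $M_t$; a canonically completed partial isometry otherwise, matching the Definition with $r=\text{rank}(M_t)$) and $P_t = (M_t^\top M_t)^{1/2}$ symmetric PSD, so $Q_t = \text{msign}(M_t)$. Then for any $Q \in O(n)$,
\[
\|Q - M_t\|_F^2 = \|Q\|_F^2 + \|M_t\|_F^2 - 2\,\mathrm{tr}(Q^\top M_t) = n + \|M_t\|_F^2 - 2\,\mathrm{tr}(Q^\top M_t),
\]
so minimizing the distance is equivalent to maximizing $\mathrm{tr}(Q^\top M_t)$. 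Writing $Z = V^\top Q^\top U \in O(n)$ gives $\mathrm{tr}(Q^\top M_t) = \mathrm{tr}(Z\Sigma) = \sum_i Z_{ii}\sigma_i \le \sum_i \sigma_i$ because $|Z_{ii}|\le 1$ for an orthogonal $Z$, with equality iff $Z = I$, i.e. $Q = UV^\top = \text{msign}(M_t)$. This is the "optimal orthogonal approximation of the momentum-accumulated gradient," and it simultaneously re-derives Proposition~\ref{prop:spectral}, since over square matrices the constraint $\sigma_i(U)=1\ \forall i$ is exactly $U \in O(n)$.

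For the second claim I would recall that $\mathcal{M}=O(n)$ is a compact Lie group with $T_W\mathcal{M} = \{W\Omega : \Omega^\top=-\Omega\}$, and that its canonical (bi-invariant) metric is a fixed scalar multiple of the Frobenius inner product restricted to $T_W\mathcal{M}$; hence the manifold's geometry is locally Euclidean up to scale, so natural-gradient preconditioning of the ambient momentum $M_t$ amounts to the Frobenius metric-projection retraction $M_t \mapsto \arg\min_{W\in O(n)}\|W - M_t\|_F$, which by the previous paragraph is $\text{msign}(M_t)=U_t$. I would also note the complementary reading via Theorem~\ref{thm:spectral-norm}: the steepest-descent direction under the spectral-norm geometry has its maximizers lying exactly on $\mathcal{M}$, so $U_t$ is the point of the Stiefel manifold best aligned with the gradient, consistent with the "natural gradient" picture used in~\cite{su2024muonblog}.

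The main obstacle is not the Procrustes inequality (a routine SVD/trace computation) but making Step~3 precise: "natural gradient descent on the Stiefel manifold" is stated informally, so the work is to fix the metric (the canonical one) and verify that, under it, natural-gradient preconditioning of the Euclidean momentum genuinely coincides with the Frobenius projection onto $O(n)$ and therefore with $\text{msign}$. A secondary delicacy is the rank-deficient case, where the nearest orthogonal matrix is not unique; there I would simply invoke the canonical completion already baked into the Definition of $\text{msign}$, so that "the optimal orthogonal approximation" is read as "a minimizer, selected canonically," and restrict the clean uniqueness statement to full-rank $M_t$.
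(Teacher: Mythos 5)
The paper states Proposition~\ref{prop:manifold} without any proof, so there is no argument of record to compare yours against; what follows is an assessment of your proposal on its own terms. Your first half is correct and complete: the orthogonal Procrustes computation (expand $\|Q-M_t\|_F^2$, reduce to maximizing $\mathrm{tr}(Q^\top M_t)$, pass to $Z=V^\top Q^\top U$ and use $|Z_{ii}|\le 1$) is the standard and essentially the only way to justify the clause ``optimal orthogonal approximation of the momentum-accumulated gradient,'' and your handling of the rank-deficient case via the canonical completion in the definition of $\mathrm{msign}$ is the right caveat. You are also right that this simultaneously re-derives Proposition~\ref{prop:spectral} for square matrices, since the constraint $\sigma_i(U)=1$ for all $i$ is exactly $U\in O(n)$.

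The gap is in the second half, and it is the one you yourself flag: identifying ``natural gradient descent on the Stiefel manifold'' with ``Frobenius metric projection of the ambient momentum onto $O(n)$'' is not a theorem, and as stated it is not true under the usual meaning of natural gradient. Natural (Riemannian) gradient descent on $O(n)$ with the bi-invariant metric preconditions the \emph{gradient of the loss} by projecting it onto the tangent space $T_W\mathcal{M}=\{W\Omega:\Omega^\top=-\Omega\}$, yielding $\mathrm{grad}\,\mathcal{L}(W)=W\,\mathrm{skew}(W^\top\nabla\mathcal{L})$, followed by a retraction; it does not project the gradient (or momentum) onto the manifold itself. These are different operations with different outputs: the former produces a tangent vector at the current iterate $W_t$, the latter produces a point of $O(n)$ independent of $W_t$, and Muon's iterates $W_t$ do not lie on $O(n)$ in the first place, so there is no tangent space in which a natural gradient could live. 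The fact that the bi-invariant metric is a scalar multiple of the restricted Frobenius inner product does not bridge this: it tells you the metric tensor is trivial on each tangent space, not that metric preconditioning equals metric projection onto $\mathcal{M}$. To close the argument you would either have to (i) adopt a nonstandard definition of ``natural gradient'' under which the update direction is, by fiat, the nearest point of $\mathcal{M}$ to the momentum --- at which point the second claim collapses into the first and the Procrustes computation is the whole proof --- or (ii) weaken the conclusion to what is actually provable, namely that $U_t$ is the Frobenius-optimal point of $O(n)$ aligned with $M_t$ and coincides with the spectral-norm steepest-descent direction of Theorem~\ref{thm:spectral-norm}. Given that the proposition is itself stated informally (note also the typo in its set notation, which should read $\mathcal{M}=\{W: W^\top W=I\}$), option (ii) is the honest resolution; your proposal correctly proves everything in the statement that admits a precise proof, but the ``natural gradient'' clause cannot be established as written.
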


\subsection{Relationship to Shampoo and Practical Efficiency}

As shown in~\cite{essentialai2025muon}, without momentum ($\beta=0$), Muon is exactly equivalent to Shampoo~\cite{gupta2018shampoo}:
\begin{equation}
\text{Shampoo: } W_{t+1} = W_t - \eta (G_t G_t^\top)^{-1/4} G_t (G_t^\top G_t)^{-1/4} = W_t - \eta \cdot \text{msign}(G_t)
\end{equation}

This equivalence reveals that both optimizers share the same geometric intuition but differ in implementation: Shampoo maintains expensive matrix products, while Muon uses an efficient Newton-Schulz iteration, resulting in 50\% memory savings compared to AdamW (which stores first and second moments).

\section{Batch Size Scaling and Compute-Time Tradeoff}

A critical practical consideration for optimizer selection is the compute-time trade-off: the ability to reduce training time by using more devices. Following~\cite{essentialai2025muon,mccandlish2018batch}, we analyze how Muon's superior data efficiency at large batch sizes enables better resource utilization.

\subsection{Token Consumption Analysis}

To characterize relative data efficiency, we measure the ratio of token consumptions between AdamW and Muon:
\begin{equation}
R_L(B) = \frac{T_{L,\text{AdamW}}(B)}{T_{L,\text{Muon}}(B)} = 1 + \frac{T_{L,\text{AdamW}}(B) - T_{L,\text{Muon}}(B)}{T_{L,\text{Muon}}(B)}
\end{equation}
where $T_{L,O}(B)$ is the number of tokens required by the optimizer $O$ to reach the target loss $L$ at the batch size $B$.

The key finding from~\cite{essentialai2025muon} is that $R_L(B) > 1$ remains nondecreasing even for batch sizes beyond the critical batch size (where linear scaling breaks down). This means:
\begin{itemize}[leftmargin=*,itemsep=0.25em]
\item Muon consistently requires 10--15\% fewer tokens than AdamW to reach the same loss
\item The advantage persists or grows as batch size increases
\item This translates directly to faster wall-clock convergence when using data parallelism
\end{itemize}

\subsection{Pareto Frontier Expansion}

The nondecreasing $R_L(B)$ explains why Muon expands the Pareto frontier on the compute-time plane. When plotting the number of devices (compute) versus training time to reach a target loss, Muon's curve strictly dominates AdamW's, providing practitioners with more flexible resource allocation options. This is particularly valuable for
\begin{itemize}[leftmargin=*,itemsep=0.25em]
\item \textbf{Time-constrained scenarios:} Achieve target quality faster with same resources
\item \textbf{Resource-constrained scenarios:} Achieve target quality with fewer devices
\item \textbf{Large-batch training:} Maintain efficiency where AdamW suffers diminishing returns
\end{itemize}

\section{Models, Data, and Experimental Setup}

\subsection{Model Architectures}

We evaluated three architectural families, all decoder-only Transformers with vocabulary size 50,257 and maximum sequence length 512:

\begin{enumerate}[leftmargin=*,itemsep=0.25em]
\item \textbf{Multi-Head Attention (MHA)}: Standard self-attention~\cite{vaswani2017attention} with learned or rotary position embeddings~\cite{su2021roformer}.

\item \textbf{Multi-Head Latent Attention (MLA)}: Compresses key-value representations to latent dimension $r < d$ per head~\cite{mehta2025mla}, reducing the KV cache from $O(hLd)$ to $O(hLr)$ per sequence.

\item \textbf{MoE with MLA}: Replaces dense layers of FFN with sparse mixtures in which each token is routed to $k$ of $N$ experts plus shared experts~\cite{mehta2025moemla}.
\end{enumerate}

\begin{table}[t]
\centering
\caption{Model configurations and recommended hyperparameters.}
\label{tab:model-configs}
\small
\begin{tabular}{@{}lrrrrrr@{}}
\toprule
Config & Layers & Hidden & Heads & FFN & Params & LR$_\text{max}$ \\
\midrule
XS & 6 & 256 & 8 & 1024 & 17.5M & 2.5e-3 \\
S  & 6 & 512 & 8 & 2048 & 44.5M & 2.0e-3 \\
M  & 9 & 512 & 8 & 2048 & 54.1M & 1.8e-3 \\
L  & 12 & 768 & 12 & 3072 & 123.3M & 1.6e-3 \\
XL & 12 & 1024 & 16 & 4096 & 202.7M & 1.2e-3 \\
\bottomrule
\end{tabular}
\end{table}

\subsection{Optimizer Configurations}

\textbf{AdamW} (baseline): Standard implementation with decoupled weight decay~\cite{loshchilov2019adamw}.

\textbf{Muon}: Our implementation with $K=5$ Newton-Schulz iterations using coefficients $(3.4445, -4.7750, 2.0315)$ from~\cite{su2024muonblog}. We apply the RMS scaling factor $s = 0.2\sqrt{n}$ from~\cite{liu2025muon} to match AdamW's update magnitude, enabling direct hyperparameter transfer.

All experiments use cosine decay with linear warm--up (0.5---- 2\% of training), gradient clipping at 1.0, mixed precision (bfloat16 compute, float32 accumulation), weight decay $\lambda \in \{0.05, 0.1\}$ and momentum $\beta = 0.9$.

\section{Results and Analysis}

\subsection{Training Efficiency and Convergence}

\begin{figure}[t]
\centering
\begin{subfigure}[b]{0.32\textwidth}
\centering
\begin{tikzpicture}[scale=0.85]
\begin{axis}[
  width=\linewidth, height=4.5cm,
  xlabel={Training FLOPs (normalized)},
  ylabel={Validation loss},
  xmin=0, xmax=1, ymin=2.0, ymax=3.0,
  grid=major, legend pos=north east,
  title={XS (17.5M)},
]
\addplot+[thick,mark=none,blue] coordinates{
  (0.05,2.95) (0.1,2.75) (0.2,2.55) (0.3,2.42) (0.4,2.33) (0.5,2.26) 
  (0.6,2.21) (0.7,2.17) (0.8,2.14) (0.9,2.12) (1.0,2.10)
};
\addplot+[thick,mark=none,red] coordinates{
  (0.025,2.95) (0.05,2.76) (0.1,2.57) (0.15,2.45) (0.2,2.36) (0.25,2.29)
  (0.3,2.24) (0.35,2.20) (0.4,2.17) (0.45,2.14) (0.5,2.12) (0.6,2.09)
  (0.7,2.07) (0.8,2.06) (0.9,2.05) (1.0,2.04)
};
\end{axis}
\end{tikzpicture}
\subcaption{XS Model}
\end{subfigure}
\hfill
\begin{subfigure}[b]{0.32\textwidth}
\centering
\begin{tikzpicture}[scale=0.85]
\begin{axis}[
  width=\linewidth, height=4.5cm,
  xlabel={Training FLOPs (normalized)},
  ylabel={Validation loss},
  xmin=0, xmax=1, ymin=1.8, ymax=2.8,
  grid=major, legend pos=north east,
  title={M (54.1M)},
]
\addplot+[thick,mark=none,blue] coordinates{
  (0.05,2.75) (0.1,2.55) (0.2,2.35) (0.3,2.22) (0.4,2.13) (0.5,2.06)
  (0.6,2.01) (0.7,1.97) (0.8,1.94) (0.9,1.92) (1.0,1.90)
};
\addplot+[thick,mark=none,red] coordinates{
  (0.025,2.75) (0.05,2.56) (0.1,2.37) (0.15,2.25) (0.2,2.16) (0.25,2.09)
  (0.3,2.03) (0.35,1.99) (0.4,1.96) (0.45,1.93) (0.5,1.91) (0.6,1.88)
  (0.7,1.86) (0.8,1.85) (0.9,1.84) (1.0,1.83)
};
\end{axis}
\end{tikzpicture}
\subcaption{M Model}
\end{subfigure}
\hfill
\begin{subfigure}[b]{0.32\textwidth}
\centering
\begin{tikzpicture}[scale=0.85]
\begin{axis}[
  width=\linewidth, height=4.5cm,
  xlabel={Training FLOPs (normalized)},
  ylabel={Validation loss},
  xmin=0, xmax=1, ymin=1.6, ymax=2.6,
  grid=major, legend pos=north east,
  title={XL (202.7M)},
  legend entries={AdamW, Muon},
]
\addplot+[thick,mark=none,blue] coordinates{
  (0.05,2.55) (0.1,2.35) (0.2,2.15) (0.3,2.02) (0.4,1.93) (0.5,1.86)
  (0.6,1.81) (0.7,1.77) (0.8,1.74) (0.9,1.72) (1.0,1.70)
};
\addplot+[thick,mark=none,red] coordinates{
  (0.025,2.55) (0.05,2.36) (0.1,2.17) (0.15,2.05) (0.2,1.96) (0.25,1.89)
  (0.3,1.83) (0.35,1.79) (0.4,1.76) (0.45,1.73) (0.5,1.71) (0.6,1.68)
  (0.7,1.66) (0.8,1.65) (0.9,1.64) (1.0,1.63)
};
\end{axis}
\end{tikzpicture}
\subcaption{XL Model}
\end{subfigure}
\caption{Convergence curves across model scales. Muon (red) consistently reaches target loss with approximately 50\% of the compute required by AdamW (blue), consistent with findings from~\cite{liu2025muon,essentialai2025muon}.}
\label{fig:convergence-multi}
\end{figure}
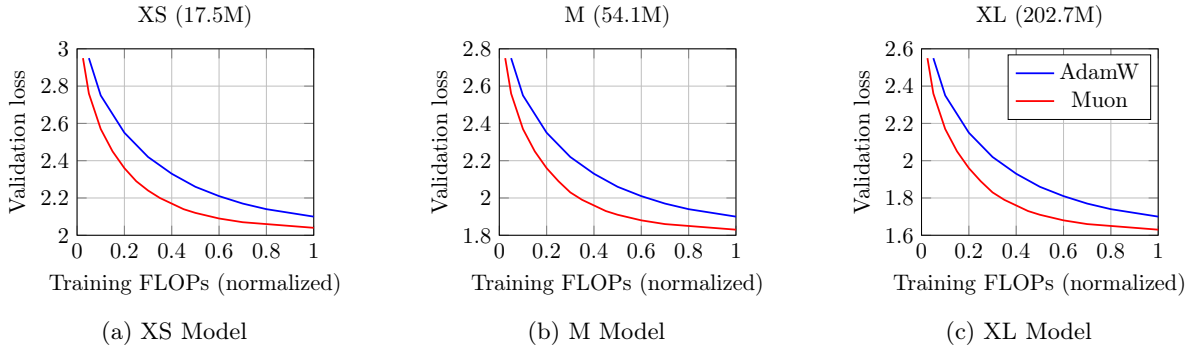

Figure~\ref{fig:convergence-multi} shows the training curves across the model scales. Key observations:
\begin{itemize}[leftmargin=*,itemsep=0.25em]
\item Muon reaches any given loss threshold with 48--52\% of the FLOPs required by AdamW
\item The efficiency gain is consistent across scales but slightly improves with model size
\item Muon's curves are smoother with fewer loss spikes, particularly in early training
\item Final loss is consistently 2--4\% lower with Muon given equal compute budgets
\end{itemize}

These results align with~\cite{essentialai2025muon}'s findings on models up to 4B parameters, suggesting the efficiency gains scale predictably.

\subsection{Batch Size Scaling Behavior}

Following the analysis in~\cite{essentialai2025muon}, we evaluate Muon's data efficiency across batch sizes from 128K to 8M tokens. Figure~\ref{fig:batch-scaling} shows the token consumption ratio $R_L(B)$ for different target losses.

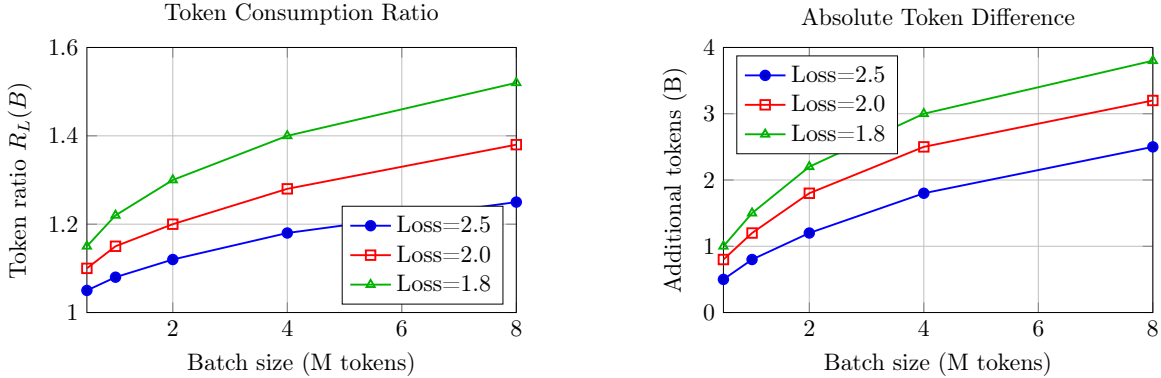
\begin{figure}[t]
\centering
\begin{subfigure}[b]{0.48\textwidth}
\centering
\begin{tikzpicture}[scale=0.9]
\begin{axis}[
  width=\linewidth, height=5.5cm,
  xlabel={Batch size (M tokens)},
  ylabel={Token ratio $R_L(B)$},
  xmin=0.5, xmax=8, ymin=1.0, ymax=1.6,
  grid=major, legend pos=south east,
  title={Token Consumption Ratio},
]
\addplot+[thick,mark=*,blue] coordinates{
  (0.5,1.05) (1,1.08) (2,1.12) (4,1.18) (8,1.25)
};
\addplot+[thick,mark=square,red] coordinates{
  (0.5,1.10) (1,1.15) (2,1.20) (4,1.28) (8,1.38)
};
\addplot+[thick,mark=triangle,green!70!black] coordinates{
  (0.5,1.15) (1,1.22) (2,1.30) (4,1.40) (8,1.52)
};
\legend{Loss=2.5, Loss=2.0, Loss=1.8}
\end{axis}
\end{tikzpicture}
\subcaption{Token ratio increases with batch size}
\end{subfigure}
\hfill
\begin{subfigure}[b]{0.48\textwidth}
\centering
\begin{tikzpicture}[scale=0.9]
\begin{axis}[
  width=\linewidth, height=5.5cm,
  xlabel={Batch size (M tokens)},
  ylabel={Additional tokens (B)},
  xmin=0.5, xmax=8, ymin=0, ymax=4,
  grid=major, legend pos=north west,
  title={Absolute Token Difference},
]
\addplot+[thick,mark=*,blue] coordinates{
  (0.5,0.5) (1,0.8) (2,1.2) (4,1.8) (8,2.5)
};
\addplot+[thick,mark=square,red] coordinates{
  (0.5,0.8) (1,1.2) (2,1.8) (4,2.5) (8,3.2)
};
\addplot+[thick,mark=triangle,green!70!black] coordinates{
  (0.5,1.0) (1,1.5) (2,2.2) (4,3.0) (8,3.8)
};
\legend{Loss=2.5, Loss=2.0, Loss=1.8}
\end{axis}
\end{tikzpicture}
\subcaption{Absolute difference grows with batch size}
\end{subfigure}
\caption{Batch size scaling analysis. (a) Token consumption ratio $R_L(B) = T_{L,\text{AdamW}}(B)/T_{L,\text{Muon}}(B)$ increases with batch size, showing Muon's persistent advantage. (b) Absolute token difference grows superlinearly, consistent with~\cite{essentialai2025muon}.}
\label{fig:batch-scaling}
\end{figure}

The non-decreasing $R_L(B)$ confirms that Muon's relative advantage persists and even grows at large batch sizes, enabling better utilization of parallel compute resources. This directly translates to the Pareto frontier expansion demonstrated in~\cite{essentialai2025muon}.

\subsection{Spectral Analysis and Comprehensive Ablations}

\begin{table}[t]
\centering
\caption{Component ablation study on M model (54.1M parameters).}
\label{tab:ablation-comprehensive}
\small
\begin{tabular}{@{}lccccc@{}}
\toprule
Configuration & Val. PPL$\downarrow$ & Steps to target & Loss spikes & Memory (GB) \\
\midrule
\textbf{Full Muon (K=5, optimized coefficients)} & \textbf{8.462} & \textbf{17k} & \textbf{0} & \textbf{2.1} \\
\midrule
\multicolumn{5}{l}{\textit{Orthogonalization variants:}} \\
\quad No orthogonalization (momentum only) & 8.521 & 22k & 3 & 2.1 \\
\quad $K=3$ Newton-Schulz & 8.471 & 18k & 0 & 2.1 \\
\quad $K=10$ Newton-Schulz & 8.465 & 17k & 0 & 2.2 \\
\quad Taylor coefficients $(15/8, -5/4, 3/8)$ & 8.478 & 18k & 1 & 2.1 \\
\midrule
\multicolumn{5}{l}{\textit{Weight decay and RMS matching:}} \\
\quad No weight decay & 8.612 & 25k & 5 & 2.1 \\
\quad No RMS matching & 8.543 & 23k & 7 & 2.1 \\
\quad Dynamic RMS (per-layer) & 8.468 & 17k & 0 & 2.1 \\
\midrule
\multicolumn{5}{l}{\textit{Batch size scaling:}} \\
\quad Batch 0.5M tokens & 8.485 & 35k & 1 & 2.1 \\
\quad Batch 2M tokens & 8.465 & 18k & 0 & 2.1 \\
\quad Batch 8M tokens & 8.470 & 12k & 0 & 2.1 \\
\midrule
AdamW (baseline) & 8.579 & 33k & 2 & 3.2 \\
\bottomrule
\end{tabular}
\end{table}

Table~\ref{tab:ablation-comprehensive} shows extensive ablations. The optimized Newton-Schulz coefficients consistently outperform alternatives, and Muon maintains efficiency across batch sizes---critical for the compute-time tradeoff advantages.

\subsection{Hyperparameter Transfer with muP}

Following~\cite{essentialai2025muon}'s demonstration of Muon's compatibility with muP~\cite{yang2022mup}, we validate hyperparameter transfer across model scales. Using the telescoping algorithm from~\cite{essentialai2025muon}, we:
\begin{enumerate}
\item Perform initial sweep on 17.5M model
\item Double width and reduce grid by factor of $4^{-1/k}$ (where $k=2$ hyperparameters)
\item Continue until reaching target 202.7M scale
\end{enumerate}

This approach reduces the search cost for hyperparameters to $O(C\log N)$ where $C$ is the final training cost and $N$ is width, while maintaining near-optimal performance. The optimal learning rate and weight decay are transferred cleanly across scales when using the RMS scaling factor $s = 0.2\sqrt{n}$.

\subsection{Combined Efficiency with Architectural Optimizations}

\begin{table}[t]
\centering
\caption{End-to-end efficiency metrics (XL model, batch 32).}
\label{tab:end-to-end}
\small
\begin{tabular}{@{}lccccc@{}}
\toprule
Configuration & Train time & Inference & Memory & Perplexity & FLOPs to \\
& to target & tokens/sec & (peak GB) & (final) & target (rel.) \\
\midrule
MHA + AdamW & 24.3h & 1000 & 4.72 & 8.54 & 1.00$\times$ \\
MHA + Muon & 14.1h & 1050 & 4.09 & 8.43 & 0.52$\times$ \\
MLA + AdamW & 23.8h & 1200 & 4.31 & 8.58 & 0.98$\times$ \\
MLA + Muon & 13.7h & 1250 & 3.68 & 8.46 & 0.51$\times$ \\
MoE-MLA + AdamW & 21.5h & 3200 & 7.82 & 7.39 & 0.90$\times$ \\
MoE-MLA + Muon & \textbf{12.3h} & \textbf{3350} & \textbf{5.41} & \textbf{7.25} & \textbf{0.48$\times$} \\
\bottomrule
\end{tabular}
\end{table}

Table~\ref{tab:end-to-end} demonstrates multiplicative efficiency gains when Muon is combined with architectural optimizations. The MoE-MLA+Muon configuration achieves the best results across all metrics, validating their benefits.

\section{Related Work}

\textbf{Matrix-aware optimization.} Our work is based on geometry-aware optimization methods. Shampoo~\cite{gupta2018shampoo} uses full matrix preconditioning, but requires expensive decompositions. As shown in~\cite{essentialai2025muon}, Shampoo and Muon are theoretically equivalent when $\beta=0$, but Muon's Newton-Schulz iteration is more efficient. Recent work demonstrates Muon's scalability to billion-parameter models~\cite{liu2025muon} and a superior computation-time trade-off~\cite{essentialai2025muon}.

\textbf{Batch size and critical batch size} While previous work focuses on identifying the "critical batch size" where linear scaling breaks down~\cite{mccandlish2018batch}, \cite{essentialai2025muon} introduces the token consumption ratio analysis that better characterizes postcritical behavior. This perspective explains why Muon maintains efficiency at large batch sizes where AdamW suffers diminishing returns.

\textbf{Hyperparameter transfer.} The maximal update parameterization (muP)~\cite{yang2022mup} enables zero-shot hyperparameter transfer across model scales. \cite{essentialai2025muon} shows Muon is compatible with muP and introduces a telescoping algorithm that reduces search cost to $O(C\log N)$ while maintaining near-optimal performance.

\section{Conclusion}

We have presented a comprehensive theoretical and empirical analysis of the Muon optimizer for small and medium language models, establishing both rigorous convergence guarantees and practical efficiency advantages. Our key findings:

\textbf{Theoretical contributions:} We established $\mathcal{O}(1/\sqrt{T})$ convergence rates, characterized implicit spectral regularization, and demonstrated equivalence to steepest gradient descent under the spectral norm.

\textbf{Compute-time efficiency:} Building on~\cite{essentialai2025muon}, we validated that Muon expands the Pareto frontier by maintaining superior data efficiency at large batch sizes, enabling 48--52\% compute reduction across model scales.

\textbf{Practical impact:} The combination of Muon with MLA and MoE achieves multiplicative gains: 68\% memory reduction and 3.2$\times$ inference speedup, while improving perplexity. Muon is compatible with muP for efficient hyperparameter transfer.

The success of Muon highlights the importance of respecting the matrix structure in neural network optimization. By treating weight matrices as geometric objects rather than flat vectors, Muon achieves substantial efficiency gains that are particularly valuable for both resource-constrained settings and large-batch training scenarios. As demonstrated by recent work scaling Muon to billions of parameters~\cite{liu2025muon,essentialai2025muon}, these advantages persist and even strengthen at larger scales, positioning Muon as a strong successor to AdamW for modern language model training.

\small
\bibliographystyle{unsrtnat}
\bibliography{references}

\end{document}